\documentclass[conference]{IEEEtran}

\usepackage{amsmath}
\usepackage{amsthm}
\usepackage{newtxmath}
\usepackage{graphicx}
\usepackage{bm}
\usepackage[T1]{fontenc}
\usepackage{comment}
\usepackage[colorlinks=true, urlcolor=blue, citecolor=black, linkcolor=black]{hyperref}
\usepackage{silence}
\usepackage{subcaption}
\usepackage{tikz}
\usetikzlibrary{arrows.meta}
\usepackage{cite}
\newtheorem{theorem}{Theorem}

\IEEEoverridecommandlockouts

\begin{document}

\title{Spectral Higher-Order Neural Networks Have Sharp Expressivity Bounds}

\author{
    \IEEEauthorblockN{Gianluca Peri, Diego Febbe, Duccio Fanelli}
    \vspace{5pt}
    \IEEEauthorblockA{Department of Physics, University of Florence \& INFN, Italy \\ \\
    \texttt{\{gianluca.peri, diego.febbe, duccio.fanelli\}@unifi.it}}
    \thanks{This work has been accepted for presentation at COMPENG 2026 
-- 2026 IEEE Workshop on Complexity in Engineering.}
}

\maketitle
\begin{abstract}
    \noindent
    Neural hypergraphs are a natural generalization of neural networks, the reference models in modern machine learning. Yet, their deployment has proven demanding: the number of weighted hyperedges required leads to an intractable parameter explosion. However, a novel parametrization that leverages spectral attributes for neural hypergraphs has been recently proposed, that enables to recycle parameters via a weight sharing scheme and consequently yields a significant reduction of the associated computational cost. Preliminary tests carried out on spectral higher-order architectures pointed to meaningful improvements in both performance and interpretability. Building on these results, we advance the benchmarking efforts by evaluating the spectral higher order framework on N-bit parity tasks, a well-established testbed known to be particularly challenging. As we will convincingly argue, Spectral Higher-Order Neural Networks (\textsc{shonn}s) possess a versatile and highly tunable hypothesis space.
\end{abstract}

\begin{IEEEkeywords}
    machine learning, neural networks, hypergraphs
\end{IEEEkeywords}

\section{Introduction}

While in the past \textit{good old-fashioned artificial intelligence} (\textsc{gofai}) solved problems by creating \textit{ad-hoc} algorithms, modern \textit{machine learning} (\textsc{ml})  defines instead learning algorithms, that in turn produce solving routines. Such an indirect approach proved incredibly powerful, leading to a boom in modern artificial intelligence capabilities that nowadays spans from simple agentic behavior \cite{PERI2026118356} to language models \cite{Turner2023AnIT}. The foundational paradigm underlying nearly all contemporary machine learning algorithms is the \textit{neural network} (\textsc{nn}), where solutions are generated by tuning the connections' weights of a massive network. 

Over the years, numerous network topologies have been proposed, but the most utilized remains the \textit{feedforward neural network} (\textsc{fnn}): neurons (\textit{i.e.} the network's nodes) are arranged into layers, and then connected by unidirectional bundles of links (see Figure \ref{fig:left}). A natural generalization is represented by \textit{higher-order neural network} (\textsc{honn}) \cite{Rumelhart1986ParallelDP, Giles1987LearningIA, Feng2018HypergraphNN, Ebli2020SimplicialNN}, where hyperlinks are also introduced (see Figure \ref{fig:right} for a pictorial representation of the interaction scheme). Although \textsc{honn}s feature a clearly more general topology, the adoption of even their simplest variant—namely, the triadic \textsc{honn} shown in Figure \ref{fig:right}—has been rendered infeasible due to parameter scaling. Indeed, whereas standard neural networks yield an $O(N^2)$ parameter scaling (with $N$ representing the layer size), a naive triadic \textsc{honn} implementation—associating a single weight with each hyperlink—scales as $O(N^3)$. Such parameter explosion inevitably hinders the training of these higher-order networks. To overcome these limitations different solutions have been devised \cite{Shin1991ThePN, chrysos2021deep, chrysos2022augmenting}.

In addition to the above, a new parameterization for triadic \textsc{honn}s has been recently proposed, drastically reducing the parameter count via a parameter-sharing scheme that leverages the spectral properties of the underlying hypergraph. Higher-order networks based on this latter parameterization scheme benefit from the most compact $O(N^2)$ scaling known in the literature, and are referred to as \textit{spectral} \textsc{honn}s (\textsc{shonn}s) \cite{Peri2026SpectralHN} (see Appendix \ref{app:spectral-parameterization} for mathematical details on the spectral parameterization).

An interesting property of \textsc{shonn}s is that, contrary to standard \textsc{nn}s, they can operate without enforcing ad hoc non-linearities (\textit{e.g.}, \textit{ReLU}). This is made possible by the hyperlinks, which provide a source of non-linearity inherent in the network's topology. As we shall see, this notable property has practical implications: specifically, it allows to fine-tuning the model's expressiveness, up to a level that is simply out of reach for standard \textsc{nn}s.

The rest of the manuscript is structured as follows: in Section \ref{sec:honns-and-shonns} we shall briefly go through the math underpinning triadic \textsc{honn}s, and then pivot to illustrate the details regarding the parameterization of \textsc{shonn}s. From there, in Section \ref{sec:shonns-on-parity}, we shall substantiate the claims regarding \textsc{shonn}s' tunability, and test their performance on $N$-bit parity benchmarks. Finally, in Section \ref{sec:conclusions}, we will sum up and elaborate on future research directions.

\begin{figure*}[t]
     \centering
     \begin{subfigure}[b]{0.45\textwidth}
         \centering
         \begin{center}
\begin{tikzpicture}[
    >=Stealth,
    neuron/.style={
        circle, semithick, minimum size=0.95cm, inner sep=0pt,
        font=\sffamily\small
    },
    input/.style ={neuron, draw=blue!55!black,   fill=blue!8},
    output/.style={neuron, draw=red!55!black,    fill=red!10},
    link/.style     ={->, semithick},
    backlink/.style ={->, thin, black!50, dashed}
]
\def\nodeSpacing{1.2}
\def\inX{0}
\def\outX{3}

\foreach \i in {1,...,5}
    \node[input] (x\i) at (\inX, { (3-\i)*\nodeSpacing }) {$x_\i$};

\node[output] (y1) at (\outX,  1*\nodeSpacing) {$y_1$};
\node[output] (y2) at (\outX,  0*\nodeSpacing) {$y_2$};
\node[output] (y3) at (\outX, -1*\nodeSpacing) {$y_3$};

\draw[link, black!50]    (x1) -- (y1);
\draw[link, black!50]      (x1) -- (y2);
\draw[link, black!50]    (x3) -- (y2);

\draw[backlink] (x2) -- (y1);
\draw[backlink] (x2) -- (y2);
\draw[backlink] (x2) -- (y3);
\draw[backlink] (x4) -- (y2);
\draw[backlink] (x5) -- (y2);

\end{tikzpicture}
\end{center}
         \caption{Standard connectivity of a neural network.}
         \label{fig:left}
     \end{subfigure}
     \begin{subfigure}[b]{0.45\textwidth}
         \centering
         \begin{center}
\begin{tikzpicture}[
    >=Stealth,
    neuron/.style={
        circle, semithick, minimum size=0.95cm, inner sep=0pt,
        font=\sffamily\small
    },
    input/.style ={neuron, draw=blue!55!black, fill=blue!8},
    output/.style={neuron, draw=red!55!black,  fill=red!10},
    backlink/.style={->, thin, black!50, dashed},
    link/.style={->, thin, black!50},
    hoe/.style={thick}   
]
\def\nodeSpacing{1.2}
\def\inX{0}
\def\outX{3}
\def\midX{1.5}

\foreach \i in {1,...,5}
    \node[input] (x\i) at (\inX, { (3-\i)*\nodeSpacing }) {$x_\i$};

\node[output] (y1) at (\outX,  1*\nodeSpacing) {$y_1$};
\node[output] (y2) at (\outX,  0*\nodeSpacing) {$y_2$};
\node[output] (y3) at (\outX, -1*\nodeSpacing) {$y_3$};

\draw[backlink] (x1) -- (y1);
\draw[link] (x2) -- (y2);
\draw[backlink] (x3) -- (y2);
\draw[backlink] (x2) -- (y3);
\draw[link] (x5) -- (y2);
\draw[link] (x5) -- (y1);

\coordinate (M1) at (\midX,  1.5*\nodeSpacing);
\draw[hoe, black] (x1) to[out=0, in=180] (M1);
\draw[hoe, black] (x2) to[out=0, in=180] (M1);
\draw[hoe, black, ->] (M1) to[out=0, in=180] (y1);

\coordinate (M2) at (\midX, -0.5*\nodeSpacing);
\draw[hoe, black] (x3) to[out=0, in=180] (M2);
\draw[hoe, black] (x4) to[out=0, in=180] (M2);
\draw[hoe, black, ->] (M2) to[out=0, in=180] (y2);

\coordinate (M3) at (\midX, -0.8*\nodeSpacing);
\draw[hoe, black, dashed] (x3) to[out=0, in=170] (M3);
\draw[hoe, black, dashed] (x4) to[out=0, in=190] (M3);
\draw[hoe, black, ->, dashed] (M3) to[out=0, in=180] (y3);

\end{tikzpicture}
\end{center}
         \caption{Connectivity of a standard higher-order (\textit{triadic}) net.}
         \label{fig:right}
     \end{subfigure}
     
     \caption{Schematics of different neural architectures. In a standard neural network (panel \ref{fig:left}) the first-layer neurons (blue) generate output activations (denoted by $y_k$) on the neurons (red) that define the second-layer, via pairwise links (gray arrows), followed by a local non-linearity (not shown). Triadic higher-order networks (panel \ref{fig:right}) extend this architecture by including hyperlinks (in black) that capture three-body interactions (two first-layer neurons associated to one second-layer neuron).}
     \label{fig:forward-passes}
\end{figure*}

\section{HONNs and SHONNs}
\label{sec:honns-and-shonns}

In the following we shall refer exclusively to a single forward propagation between neural layers. The symbol $x$ will refer to  activations on the first-layer, whereas $y$ identify the activity associated to neurons of the second layer; $w$ stands for the involved model's weights. Symbols $\lambda$ or $\phi$ will be used to identify the relevant spectral attributes (namely, eigenvalue or eigenvector elements).

The standard forward propagation of a neural network can be cast as follows: 
\begin{equation}
    y_k = \sum _{i=0}^N w_{ki}x_i.
\end{equation}
At variance, the simplest triadic \textsc{honn} propagates the information according to the following mathematical recipe: 

\begin{equation}
    y_k = \sum _{i=0}^N w_{ki}x_i + \sum _{0 \leq i \leq j}^{N-1} \tilde{w}_{kij}x_ix_j.
    \label{eq:direct-space-triadic}
\end{equation}

Note the presence in the above expression of the \textit{bias neuron} in the linear part of the forward pass \footnote{It is worth remarking that, in principle, a general triadic higher-order network could follow to the more complex transformation:
\[
    y_k = \sum _{i=0}^N w_{ki}x_i + \sum _{0 \leq i \leq j}^{N-1} f(x_i,x_j; \bm{\theta}),
\]
where $f$ represents a generic three-body coupling function, and $\bm{\theta}$ is the associated parameter vector. However, it is customary to consider only multiplicative couplings \cite{Giles1987LearningIA}. This practice amounts to considering only the first order effects of a possible, more complex, coupling function $f$.
}.
We can immediately notice the $O(N^2)$ \textit{vs.} $O(N^3)$ scaling that we anticipated in the previous section. The spectral parameterization solves this problem via a reformulation of the higher-order interactions:
\begin{equation}
    y_k = \sum _{i=0}^N w_{ki}x_i + \sum _{0 \leq i \leq j}^{N-1} (\tilde{\lambda} _{ij}^{(in)}-\tilde{\lambda} _k^{(out)})\tilde{\phi}_{kij}x_ix_j.
    \label{eq:extended-spectral-param}
\end{equation}
Denote by $N_{in}$ the number of neurons in the first layer, and $N_{out}$ the number in the second layer. Then, $\tilde{\lambda} _{ij}^{(in)}$ is the element of a $N_{in} \times N_{in}$ matrix of (spectral) parameters, $\tilde{\lambda} _k^{(out)}$ is one entry of  a vector of length $N_{out}$, and finally $\phi _{kij}$ defines a $N_{out} \times N_{in} \times N_{in}$ parameter tensor. The rigorous underlying formalism is illustrated in details in  Appendix \ref{app:spectral-parameterization}. The above decomposition in terms of generalized eigenvectors and eigenvalues opens the door to a novel training paradigm where only a portion of the spectral parameters is optimized \cite{chicchi2021training}; for example, one could choose to train only the eigenvalues and a portion of the eigenvectors' tensor, or, as we shall, solely optimize the eigenvalues, yielding a consequent compression of the parameter scaling down to $O(N^2)$, even for the higher-order counterpart of the neural model. \textit{A priori}, While training only a subset of the model's parameters risks hindering expressivity and performance, existing literature rules out this risk for this specific case, provided enough neurons are present \cite{Peri2026SpectralHN}.

At this point, it is mandatory to discuss one of the main differences between standard neural networks and higher-order ones: the handling of non-linear mappings. In the standard case the hypothesis space of the model is rendered non-linear via the application of a suitable \textit{activation function} $\sigma$: When a signal reaches one of the network's hidden layers, the activation function $\sigma$ is applied to it neuron-wise. Without this crucial step, the neural network would be confined to a strictly linear hypothesis space. On the contrary, higher-order neural models are intrinsically expressive: the non-linearity is inherently encapsulated into the topology of the model via its higher-order neural couplings. Because this characteristic obviates the need for an explicit non-linearity $\sigma$, higher-order models can be successfully deployed either way. In the following we shall specifically focus on the $\sigma$-less case. Remarkably, it has been proven that $\sigma$-less triadic neural networks—when equipped with a sufficient number of neurons across enough layers—can approximate any continuous function, even if only their eigenvalues are optimized \cite{Peri2026SpectralHN}. In the following we will expand on this notable result, by empirically investigating the expressivity of the triadic models under eigenvalue-only training, for specific parity tasks as hereafter detailed.

\section{SHONNs on parity tasks}
\label{sec:shonns-on-parity}

\begin{figure*}[ht]
    \centering
    \includegraphics[width=0.4\linewidth]{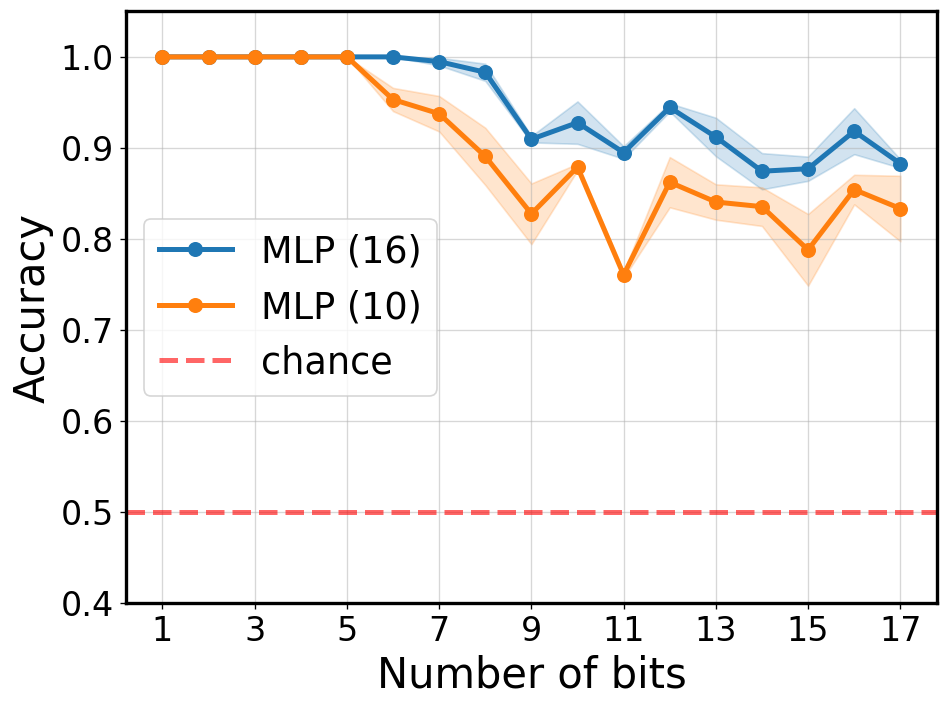}
    \caption{Results of $N$-bit parity benchmarks (up to $17$ bits) on simple $3$-layer \textsc{mlp}s, with $16$ and $10$ hidden units respectively (further implementation details in Appendix \ref{app:details-on-parity-exps}). As proven in Appendix \ref{app:relu-mlp-on-parity}, $MLP(10)$ (the \textsc{mlp} with $10$ hidden units) should hold sufficient expressive power to exactly solve the tasks  up to $10$ bits; however, in practice, the learning process is unable to optimize the network to this theoretical ideal. A similar pattern is reported for $MLP(16)$. Notice that  adding extra neurons helps the optimizer to find a better parameter configuration. In practice, it seems that an $MLP(N)$ is not able to learn how to solve the $N$-bit parity task, despite having the potentiality for it. Adding more neurons helps the optimization process, but also pushes the theoretical expressivity of the model forward, thus creating a fuzzy situation where no sharp expressivity bound can be reached trough learning.}
    \label{fig:mlp}
\end{figure*}

Even for the simplest standard model—a three-layer \textsc{mlp}—the \textit{universal approximation theorem} \cite{Cybenko1989ApproximationBS} guarantees that the network can approximate any continuous map with arbitrary precision, provided a sufficient number of hidden neurons are used. This kind of unbounded expressivity can be extremely useful, but it can also lead to overfitting problems. 

Finite-size, spectral, higher-order networks, when used in combination with neural activation functions, are more expressive than traditional neural networks and can, therefore, suffer from similar overfitting problems. However, the higher-order models analyzed herein can also be effectively utilized without activation functions. In this latter setting, the ensuing models are characterized by a restricted hypothesis space whose span can be tuned via simple architectural choices (\textit{i.e.}, the number of layers). Specifically, it has been shown that when integrated into larger neural architectures, these models can act as implicit regularizers \cite{Peri2026SpectralHN}.

To further explore the expressivity of SHONNs, we here carry out N-bit parity benchmarks on them. Parity tasks are a notoriously challenging problem for standard neural networks: a binary input vector of size $N_I$ has to be classified as \textit{even} or \textit{odd}, depending on its parity \footnote{As an example, the $N_I=3$ input vector $(+1,-1,-1)$ has an even number of $-1$ and is therefore \textit{even}.}. The intrinsic complexity of the parity mapping stems from its sensitivity: even for large inputs, flipping a single activation in the first layer must trigger a completely opposite output pattern. For $N_I = 2$ the associated parity task collapses to the \textit{xor} mapping, while for large $N_I$ the learning problem becomes challenging really fast.

In Figure \ref{fig:mlp} are reported the performances of simple one-hidden-layer \textsc{mlp}s, equipped with \textit{ReLU} activated hidden units, on parity tasks up to $17$ bits (further details on the experimental setup are presented in Appendix \ref{app:details-on-parity-exps}).

It is possible to show (see Appendix \ref{app:relu-mlp-on-parity}) that a \textit{ReLU} \textsc{nn} with $N$ hidden neurons is capable of exactly solving all the parity tasks up to $N$-bits. Beyond theoretical bounds, the optimization process in practice fails to drive the network to the global minimum of the loss function, leading to sub-optimal performance even under ideal learning conditions.

While making the network wider could, in principle, allow the optimizer to find a valid parity mapping for larger $N$, doing so forces an increase in the model's theoretical expressivity, thereby sacrificing the regularization guarantees. \textsc{shonn}s avoid this problem by decoupling the ease of optimization (that scales with the neural width) from the maximum complexity of the expressed mapping (scaling with the network's depth).

\begin{figure*}[ht]
    \centering
    \includegraphics[width=0.7\linewidth]{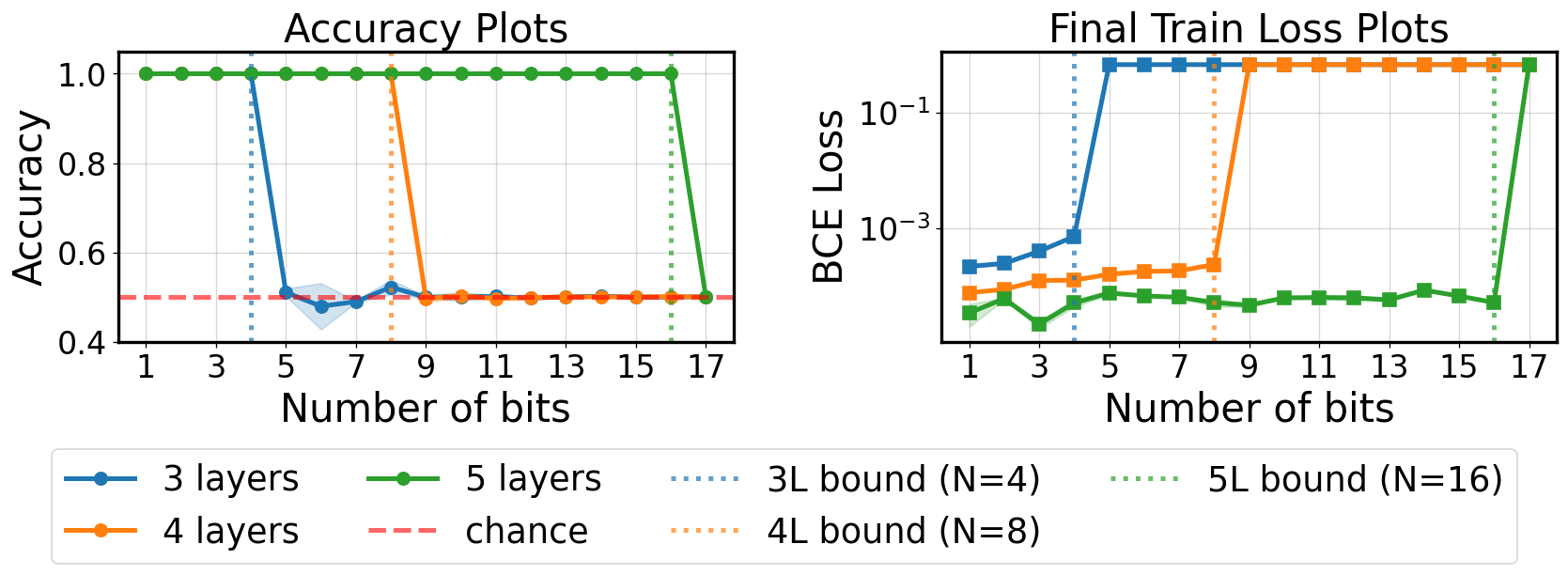}
    \caption{Results of $N$-bit parity benchmarks (up to $17$ bits) applied on \textsc{shonn}s. Specifically, these higher-order models were not equipped with neuron-wise nonlinearities, and also the spectral triadic eigenvectors were not trained, keeping the parameter scaling limited to $O(N^2)$ (further implementation details in Appendix \ref{app:details-on-parity-exps}). We can see that the expressivity of the models is strictly bound by their layer number: a $3$-layer model, for example, can express only up to order $4$ polynomial mappings, regardless of the number of hidden neurons; as a consequence, the optimizer is generally able to push the model right to the bounds of its expressivity, giving rise to the observed sharp drop-off phenomena, as depicted in figure.}
    \label{fig:shonn}
\end{figure*}

In Figure \ref{fig:shonn} the same parity benchmarks are performed on \textsc{shonn}s models (without training the triadic eigenvectors). The comparison with Figure \ref{fig:mlp} immediately reveals crucial differences: the spectral higher-order models achieve perfect accuracies inside their expressivity bounds, to then sharply drop to the random guessing regime once the number of bits in the input goes over their polynomial expressivity.

\section{Conclusions}
\label{sec:conclusions}

Given the parity experiments reported in the last section it appears that:

\begin{itemize}
\item The triadic-eigenvalues-only training paradigm seems to not hinder the expressivity of the higher-order models, if enough hidden neurons are present.
\item Since the expressivity bounds of \textsc{shonn}s clearly scale with their layer number, we are free to increase the number of hidden neurons without polluting the hypothesis space; this in turn allows the optimizer to push the model to the edge of its expressivity, giving rise to sharp transitions that are not obtainable using standard \textsc{mlp}s.
\item \textsc{shonn}s once again seem to combine ease of training with the ability to set sharp expressivity bounds, making them ideal building blocks to introduce a well-defined implicit regularization in neural models.
\end{itemize}

Further studies should aim to shed light on the possibility of employing these sharp expressivity bounds inside other neural models, expanding on the work already present in the \cite{Peri2026SpectralHN}. In particular, implicit regularization stands out as an ideal practical line of application for this research topic, but triadic-augmented \textsc{pinn}s should also be mentioned as a possible interesting prosecution.

\section*{Code Availability}

The code underpinning the presented results can be found at \href{https://github.com/gianluca-peri/shonns-expressiveness}{https://github.com/gianluca-peri/shonns-expressiveness}

\section*{Acknowledgments}    
The authors would like to acknowledge the support by \textsc{\#nextgenerationeu} (\textsc{ngeu}) funded by the Ministry of University and Research (\textsc{mur}), National Recovery and Resilience Plan (\textsc{nrrp}), project \textsc{mnesys} (PE0000006)—A multiscale integrated approach to the study of the nervous system in health and disease (DN. 1553 11.10.2022).

\clearpage
\onecolumn        
\raggedbottom     
\appendices

\section{The Spectral Parameterization}
\label{app:spectral-parameterization}

\subsection*{Spectral Parametrization for Standard Neural Networks}

Consider a weight matrix $W_{K \times N}$ of the links between layers of size $N$ and $K$. The spectral parameterization approach \cite{giambagli2021machine} leverages the implied adjacency matrix between the $N+K$ neurons of the underlying directed bipartite graph:
\begin{equation}
A = \begin{bmatrix}\mathbb{O}_{N \times N} & \mathbb{O} _{N \times K}\\W_{K \times N} & \mathbb{O}_{K \times K}\end{bmatrix}
\end{equation}
The $A$ matrix operates the forward pass on the global activation vector of the neural graph, usually called $\bm{a}$. Crucially, the forward pass of information mediated by $A$ is independent of its diagonal elements. This allows us to consider an augmented version of $A$, with non-zero diagonal entries, without modifying the activations' dynamics. We can then express the modified adjacency matrix in the form $A=\Phi \Lambda \Phi ^{-1}$, with $\Lambda$ diagonal eigenvalues matrix, and with $\Phi$:
\begin{equation}
    \Phi = \begin{bmatrix}\mathbb{I}_{N \times N} & \mathbb{O} _{N \times K}\\\phi_{K \times N} & \mathbb{I}_{K \times K}\end{bmatrix}
\end{equation}
where $\mathbb{I}$ denotes the identity matrix. At last, given that $\Phi ^{-1} = 2\mathbb{I} - \Phi$, it is easy to show that the weights $w_{ki}$, which populate the sub-diagonal block of $A$, can be written as:
\begin{equation}
    w_{ki} = (\lambda _i^{(in)} - \lambda _k^{(out)}) \phi_{ki},
\end{equation}
where $\lambda _i^{(in)}$ is the $i$-th element of the first portion of the eigenvalue diagonal (the first $N$), $\lambda _k^{(out)}$ is the $k$-th of the last one (the last $K$), and $\phi _{ki}$ is the $k,i$ element of the sub-diagonal block $\phi _{K \times N}$.

This novel parametrization offers the versatility to achieve a multitude of different goals, spanning  pruning techniques, input feature relevance detection, architecture search strategies, parameter reduction paradigms, and the training of dynamical systems \cite{chicchi2021training, buffoni2022spectral, chicchi2024automatic, Peri2025, chicchi2025deterministic}.

\subsection*{Spectral Parametrization for Triadic Higher-Order Networks}

Note that in equation \eqref{eq:direct-space-triadic} the $\tilde{w}_{kij}$ tensor element can be thought as a matrix element, $\tilde{w}_{ki'}$: here $i'$ represents an appropriate index which enumerates the set of unordered pairs $(i,j), \ 0 \leq i \leq j<N$ in lexicographical order:
\[
(0,0) \to 0, \ (0,1) \to 1, \ ..., \ (0,N-1) \to N-1, \ (1,1) \to N, \ ..., \ (N-1,N-1) \to \frac{N(N+1)}{2} - 1.
\]
In close form $i'$ is given by:
\begin{equation}
i' = j-i+\sum _{k=0}^{i-1}(N-k).
\label{eq:lexicographical_map}
\end{equation}
The lexicographical bijection \eqref{eq:lexicographical_map} allows the deployment of the spectral parametrization for triadic interactions: we first use \eqref{eq:lexicographical_map} to map the tridimensional weight tensor $[\tilde{w}_{kij}]$ into the weight matrix $[\tilde{w}_{ki'}]$; we can then simply express $[\tilde{w}_{ki'}]$ via the spectral parametrization. So from \eqref{eq:direct-space-triadic} we get:
\begin{equation}
    y_k = \sum _{i} (\lambda_i^{(in)}-\lambda _k^{(out)})\phi_{ki}x_i + \sum _{i'} (\tilde{\lambda} _{i'}^{(in)}-\tilde{\lambda} _k^{(out)})\tilde{\phi}_{ki'}x_{i'}
\end{equation}
with $x_{i'} = x_ix_j$. Finally we can expand $i'$ back to $2$ dimensions:
\begin{equation}
    y_k = \sum _{i} (\lambda_i^{(in)}-\lambda _k^{(out)})\phi_{ki}x_i + \sum _{0 \leq i \leq j} (\tilde{\lambda} _{ij}^{(in)}-\tilde{\lambda} _k^{(out)})\tilde{\phi}_{kij}x_ix_j,
\end{equation}
to obtain equation \eqref{eq:extended-spectral-param}.

\section{Details on the parity experiments}
\label{app:details-on-parity-exps}

Refer to Table \ref{tab:mlp_hyperparameters} for the specifications regarding the \textsc{mlp} experiment. Table \ref{tab:shonn_hyperparameters} refers instead to the \textsc{shonn} trainings.

\begin{table}[ht]
\centering
\begin{tabular}{lr}
\hline
\textbf{Hyperparameter / Setting} & \textbf{Value} \\
\hline
\multicolumn{2}{c}{\textbf{Model Architecture}} \\
\hline
Network Type & Multi-Layer Perceptron (\textsc{mlp}) \\
Hidden Layers & 1 \\
Activation Function & ReLU \\
Output Dimension & 1 \\
\hline
\multicolumn{2}{c}{\textbf{Dataset \& Evaluation}} \\
\hline
Input sequence lengths ($N$) & $1, 2, \dots, 17$ \\
Dataset Size & $2^N$ (all possible $N$-parity inputs) \\
Samples per $N$ & 3 \\
\hline
\multicolumn{2}{c}{\textbf{Optimization}} \\
\hline
Optimizer & Adam \\
Learning Rate & $10^{-3}$ \\
Loss Function & Binary Cross Entropy with Logits \\
Batch Size & $\min(2^{15}, 2^N)$ \\
Max Epochs & 5000 \\
Early Stopping Patience & $\max(100, 20 \times N)$ epochs (on train loss) \\
Gradient Clipping (Max Norm)& 1.0 \\
\hline
\end{tabular}
\caption{Experiment specifications and hyperparameters for the \textsc{mlp} parity task.}
\label{tab:mlp_hyperparameters}
\end{table}

\begin{table}[ht]
\centering
\begin{tabular}{lr}
\hline
\textbf{Hyperparameter / Setting} & \textbf{Value} \\
\hline
\multicolumn{2}{c}{\textbf{Model Architecture}} \\
\hline
Network Type & Spectral Higher-Order Neural Network (\textsc{shonn}) \\
Activation Function & None \\
Hidden Dimension & 128 \\
Output Dimension & 1 \\
\hline
\multicolumn{2}{c}{\textbf{Dataset \& Evaluation}} \\
\hline
Input sequence lengths ($N$) & $1, 2, \dots, 17$ \\
Dataset Size & $2^N$ (all possible $N$-parity inputs) \\
Samples per $N$ & 3 \\
\hline
\multicolumn{2}{c}{\textbf{Optimization}} \\
\hline
Optimizer & Adam \\
Learning Rate & $10^{-3}$ \\
Loss Function & Binary Cross Entropy with Logits \\
Batch Size & $\min(2^{15}, 2^N)$ \\
Max Epochs & 5000 \\
Early Stopping Patience & $\max(100, 20 \times N)$ epochs (on train loss) \\
Gradient Clipping (Max Norm)& 1.0 \\
\hline
\end{tabular}
\caption{Experiment specifications and hyperparameters for the \textsc{shonn} parity task.}
\label{tab:shonn_hyperparameters}
\end{table}

\section{ReLU activated MLPs on Parity tasks}
\label{app:relu-mlp-on-parity}

\begin{figure}
    \centering
    \begin{tikzpicture}[
    >=Stealth,
    neuron/.style={
        circle, semithick, minimum size=0.95cm, inner sep=0pt,
        font=\sffamily\small
    },
    input/.style ={neuron, draw=blue!55!black,   fill=blue!8},
    hidden/.style={neuron, draw=teal!60!black,    fill=teal!8},
    bias/.style  ={neuron, draw=orange!85!black,  fill=orange!25},
    output/.style={neuron, draw=red!55!black,     fill=red!10},
    w1link/.style  ={->, thin, black!30},               
    biaslink/.style={->, semithick, orange!80!black},   
    outlink/.style ={->, semithick, black!80},          
]
\def\vs{1.4}   
\def\inX{0}
\def\hidX{4}
\def\outX{8}

\node[input] (x1) at (\inX, { (2.5-1)*\vs }) {$-1$};
\node[input] (x2) at (\inX, { (2.5-2)*\vs }) {$-1$};
\node[input] (x3) at (\inX, { (2.5-3)*\vs }) {$+1$};
\node[input] (x4) at (\inX, { (2.5-4)*\vs }) {$-1$};

\foreach \i in {1,...,4}
    \node[hidden] (h\i) at (\hidX, { (2.5-\i)*\vs }) {$h_\i$};

\node[bias] (b) at (\hidX, { (2.5-5.2)*\vs }) {$1$};

\node[output] (y) at (\outX, 0) {$y$};

\foreach \i in {1,...,4}
  \foreach \j in {1,...,4}
      \draw[w1link] (x\i) -- (h\j);

\draw[biaslink] (b) to
      node[sloped, below, font=\small]{$w_0^{(2)}$}  (y);

\draw[outlink] (h1) -- node[sloped, above, font=\small]{$w_1^{(2)}$} (y);
\draw[outlink] (h2) -- node[sloped, above, font=\small]{$w_2^{(2)}$} (y);
\draw[outlink] (h3) -- node[sloped, above, font=\small]{$w_3^{(2)}$} (y);
\draw[outlink] (h4) -- node[sloped, below, font=\small]{$w_4^{(2)}$} (y);

\node[font=\small, align=center, gray!55!black] at (2, 2.55)
      {$W^{(1)}=\mathbf{1}$};
\end{tikzpicture}
    \caption{Depiction of the kind of neural network instance used in Theorem \ref{Theorem-N}.}
    \label{fig:theo-1}
\end{figure}

In the literature it has been shown that a one-hidden-layer neural network, with $N$ hidden units, is able to perfectly express the $N$-bit parity function \cite{Wilamowski2003SolvingPP}. However, these previous results are restricted to neural networks with threshold-like activation functions, and are not directly applicable to modern, \textit{ReLU} activated, networks, outputting logits. The following theorem covers this modern context.

\begin{theorem}
\label{Theorem-N}
A feedforward neural network with a single, \textit{ReLU} activated, hidden layer of size $N$ is able to compute all the parity mappings up to $N$-bits.
\end{theorem}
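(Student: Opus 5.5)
Since parity is symmetric, it depends only on the number of $-1$ entries. The figure (Figure \ref{fig:theo-1}) suggests the construction: set every first-layer weight to $1$, so each hidden unit receives the same pre-activation $s=\sum_{i=1}^{M} x_i$. Here $M\le N$ is the task size, and the remaining $N-M$ input slots are absent or zero-padded.

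Write $m$ for the number of $-1$ entries. Then $s = M-2m$ takes the $M+1$ values $M, M-2, \dots, -M$, and the parity is $(-1)^m$. Each hidden unit $k$ gets its own bias $b_k$, so $h_k=\mathrm{ReLU}(s+b_k)$. The output logit is
\[
y(s)=w_0^{(2)}+\sum_{k} w_k^{(2)}\,\mathrm{ReLU}(s+b_k),
\]
a continuous piecewise-linear function of $s$ with breakpoints at $-b_k$. The goal is for this function to take the value $+1$ at even $m$ and $-1$ at odd $m$ (or any fixed signs of large magnitude), so the logit classifies correctly.

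The key step is to realize the zig-zag interpolant through the $M+1$ points $(M-2m, (-1)^m)$ as such a ReLU sum.
\begin{itemize}
\item Place breakpoints at the interior points $s=-M+2,\,-M+4,\dots,M-2$. There are $M-1$ of them.
\item Add one more unit with breakpoint at $s=-M$ (bias $b=M$). On the data domain it acts as the linear term, since $\mathrm{ReLU}(s+M)=s+M$ for $s\ge -M$.
\item That uses $M$ hidden units in total, plus the output bias $w_0^{(2)}$ for the constant. Any unused units among the $N$ get $w_k^{(2)}=0$.
\end{itemize}
Solve for the output weights slope by slope:
\begin{itemize}
\item The initial slope on $[-M,-M+2]$ is $\pm1$, determined by the signs at the endpoints $m=M$ and $m=M-1$.
\item At each interior kink the slope flips from $\pm1$ to $\mp1$, so the corresponding weight is $\mp2$.
\item The constant $w_0^{(2)}$ fixes the value at $s=-M$.
\end{itemize}
Finally, scale all output weights by a factor $c>0$, so the logits become $\pm c$. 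This makes the sigmoid outputs arbitrarily close to the targets, or exactly correct under sign thresholding.

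The main obstacle is bookkeeping rather than conceptual. First, the count must come out to exactly $M\le N$ units; this works because the leftmost unit doubles as the linear term. Second, "all parity mappings up to $N$ bits" must be interpreted with a fixed width $N$: for $M<N$ the extra input weights and units are set to zero. I would also check the edge case $M=1$, where a single unit $\mathrm{ReLU}(x+1)$ with slope $\pm1$ plus a bias suffices. Everything reduces to the elementary fact that any continuous piecewise-linear function on $[-M,M]$ with $K$ kinks equals a constant plus $K+1$ ReLUs.
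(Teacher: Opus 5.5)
Your proof is correct. It shares the paper's reduction: all first-layer weights are set to $1$, so every hidden unit sees the same sum, which takes only $M+1$ values. Then $M$ ReLUs plus the output bias must produce alternating logits on those levels, and unused units are switched off exactly as in the paper's closing remark.

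The difference is in the key step. The paper places the hidden biases strictly between consecutive levels ($b_j=N-2j+1$). It then obtains the output weights by an existence argument: the $(N+1)\times(N+1)$ matrix of hidden activations, with the bias column, has determinant $(-1)^N$ by Laplace expansion along its first row. You instead put the kinks on the interior levels plus one at $s=-M$, and write the weights down explicitly: output bias $(-1)^M$, weight $(-1)^{M-1}$ on the linear unit, alternating $\pm 2$ at the kinks, then a global scale $c$.

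What each buys:
\begin{itemize}
\item The paper's route shows in one stroke that every target vector on the levels is reachable, hence every symmetric Boolean function, with no slope bookkeeping.
\item Your route gives closed-form weights and margins, and your final piecewise-linear fact recovers the same generality.
\item Your route also never needs the activation table, which is where the paper's display slips. With $S_k=-N+2k$ and $b_j=N-2j+1$ the entries are $h_j=2(k-j)+1$ for $j\le k$, not $k-j+1$. This is harmless, because the determinant only uses the first row $(0,\dots,0,1)$ and the unit-diagonal triangular block.
\end{itemize}
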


\begin{proof}
We shall now focus on proving that a \textsc{mlp} with one hidden layer of size $N$ is able to perfectly solve the $N$-bit parity problem. The fact that it's also able to solve all the other parity tasks up to $N$-bits will then easily follow.

Let the input vector be $\bm{x} \in \{-1, 1\}^N$ and consider a one-hidden-layer, \textit{ReLU} activated, \textsc{mlp} with $N$ hidden neurons. We start by constraining the weights of the first layer $w_{ij}^{(1)}$ to be identically $1$:
\begin{equation}
    w_{ij}^{(1)} = 1 \quad \forall i,j.
\end{equation}
Consequently, the inputs to all neurons in the hidden layer are simply the same sum $S(\bm{x}) = \sum_i x_i$. Because $x_i \in \{-1, 1\}$, the sum $S(\bm{x})$ can only take $N+1$ distinct values:
\begin{equation}
    S_k \in \{-N, -N+2, \dots, N-2, N\} \quad k \in \{0, \dots, N\},
\end{equation}
where $k$ runs on the possible input sequences.

We now set the hidden biases to sit between these possible values of $S(\bm{x})$: $\bm{b}=\{N-1, N-3,...,-N+3,-N+1\}$, \textit{i.e.}
\begin{equation}
    b_j = N - 2j + 1 \quad j \in \{1, \dots, N\}.
\end{equation}
The post non-linearity activation of the $j$-th hidden neuron is then $h_j = \max(0, S(\bm{x}) + b_j)$. 

The network's final output is a linear combination of the $h$s, followed by a sigmoid activation $\sigma(\cdot)$. Let $z$ be the pre-sigmoid logit:
\begin{equation}
    z = w^{(2)}_0 + \sum_{j=1}^N w^{(2)}_j h_j,
\end{equation}
note that we are adopting the convention of the $0$-th \textit{bias neuron} (backward-detached neuron with constant unitary activation) to express the bias term of the last layer (see Figure \ref{fig:theo-1} for a graphical depiction). We shall have a different value of $\bm{h}$ for each possible input combination (enumerated by $k$). Note now that to compute the parity function, and minimize the BCE loss, it is sufficient for the logit $z$ to be able to alternate between $-M,M, \ M>>0$ as $k$ increases (e.g., $M$ even parity, $-M$ for odd parity).

The trick now is to consider the square matrix $A_{(N+1) \times (N+1)}$, whose rows span the possible $k$ values present in the hidden layers' neurons after  the non-linearity, including the bias neuron: specifically the rows of $A$ will contain the distinct $N+1$ activity vectors of the hidden layer $\bm{h}(k)$. It is easy to convince oneself that the resulting matrix will be a particular kind of bordered Toeplitz matrix:

\begin{equation}
A=
\begin{pmatrix}
0 & 0 & 0 & \cdots & 0  & 0 & 1\\
1 & 0 & 0 & \cdots & 0  & 0 & 1\\
2 & 1 & 0 & \cdots & 0  & 0 & 1\\
3 & 2 & 1 & \cdots & 0  & 0 & 1\\
\vdots & \vdots & \vdots & \ddots & \vdots & \vdots & \vdots\\
N-1  & N-2  & N-3  & \cdots & 1 & 0 & 1\\
N    & N-1  & N-2  & \cdots & 2 & 1 & 1
\end{pmatrix}_{(N+1) \times (N+1)}
\end{equation}
that is promptly proven invertible via Laplace expansion on the first row, giving determinant equal to $(-1)^N$.

As we already stated, to compute the parity function, and minimize the BCE loss, it is sufficient for the logit $z$ to be able to alternate between $-M,M$ as $k$ increases. This concretely amounts to finding a single weight vector $\bm{w}^{(2)}$ that solves the linear system:
\begin{equation}
    A\bm{w}^{(2)} = \bm{t}
\end{equation}
with $\bm{t}$ having elements $t_i=(-1)^iM$. But this is surely possible given that $A$ is invertible.

We have thus proven that a \textsc{mlp} with $N$ hidden neurons in one layer is able to perfectly express the $N$-bit parity mapping; but since we can always detach hidden neurons by setting all their incoming and outgoing weights to zero, it also immediately follows that the same network is able to express all the $M$-bit parity tasks with $M \leq N$.
\end{proof}

\newpage
\bibliographystyle{plain}
\bibliography{bibliography}

\end{document}